\DeclarePairedDelimiter{\norm}{\lVert}{\rVert}
\DeclarePairedDelimiter{\dott}{\langle}{\rangle}
\newcommand{\expec}{\mathbb{E}}
\newcommand{\E}{\mathbb{E}}
\newcommand{\w}{\mathbf{w}}
\newcommand{\bq}{\mathbf{q}}
\newcommand{\bk}{\pmb{\kappa}}
\newcommand{\F}{\mathcal{F}}
\newtheorem{theorem}{Theorem}
\newtheorem{lemma}{Lemma}
\newtheorem{Assumption}{Assumption}
\title{ByGARS: Byzantine SGD with Arbitrary Number of Attackers}
\author{%
   Jayanth R. Regatti\thanks{corresponding author} \\
   Department of ECE\\
   The Ohio State University\\
   Columbus, OH 43210 \\
   \texttt{regatti.1@osu.edu}
   \And
   Hao Chen \\
   Department of ECE \\
   The Ohio State University \\
   Columbus, OH 43210 \\
   \texttt{chen.6945@osu.edu}
   \And
   Abhishek Gupta \\
   Department of ECE \\
   The Ohio State University \\
   Columbus, OH 43210 \\
   \texttt{gupta.706@osu.edu}
}
\begin{document}

\maketitle

\begin{abstract}

We propose two novel stochastic gradient descent algorithms, ByGARS and ByGARS++, for distributed machine learning in the presence of any number of Byzantine adversaries. In these algorithms, reputation scores of workers are computed using an auxiliary dataset at the server. This reputation score is then used for aggregating the gradients for stochastic gradient descent. The computational complexity of ByGARS++ is the same as the usual distributed stochastic gradient descent method with only an additional inner product computation in every iteration. We show that using these reputation scores for gradient aggregation is robust to any number of multiplicative noise Byzantine adversaries and use two-timescale stochastic approximation theory to prove convergence for strongly convex loss functions. We demonstrate the effectiveness of the algorithms for non-convex learning problems using MNIST and CIFAR-10 datasets against almost all state-of-the-art Byzantine attacks. We also show that the proposed algorithms are robust to multiple different types of attacks at the same time.
\end{abstract}

\section{Introduction}
\label{sec:intro}

With increasing data size and model complexity, the preferred method for training machine learning models at scale is to use a distributed training setting. This involves a parameter server that coordinates the training with multiple worker machines by communicating gradients and parameters. Despite the speed up in computation due to the distributed setting, it suffers from issues such as straggling workers, while also posing a significant risk to the privacy if the data is collected at a central location. Federated Learning \cite{konevcny2016federated, bonawitz2019towards} addresses  these issues where the central server only has access to the model parameters/gradients computed by the workers (or independent data owners). However, both the settings are prone to fail in the presence of dishonest workers or non-malicious failed workers \cite{kairouz2019advances}.

Thus, there has been a significant interest in devising distributed machine learning schemes in the presence of Byzantine adversaries \cite{alistarh2018byzantine, chen2017distributed, blanchard2017machine, gupta2019byzantine}. A certain fraction of the workers are assumed to be adversarial; instead of sending the actual gradients computed using a randomly sampled mini batch to the server, the adversarial workers send arbitrary or potentially adversarial gradients that could derail the optimization at the server. Several techniques have been proposed to secure the gradient aggregation against adversarial attacks under different settings such as gradient encoding \cite{chen2018draco}, asynchronous updates \cite{damaskinos2018asynchronous, xie2019zeno++}, heterogeneous datasets \cite{li2019rsa}, decentralized learning \cite{yang2019byrdie}, \cite{yang2019adversary, el2019sgd} and federated Learning \cite{chang2019cronus, portnoy2020towards}. There has also been some work in developing attack techniques that break existing defenses \cite{chang2019cronus, xie2019fall, baruch2019little}.

\begin{figure}[!b]
    \centering
    \vspace*{-1em}
    \label{fig:place1}
    \includegraphics[width=0.45\textwidth]{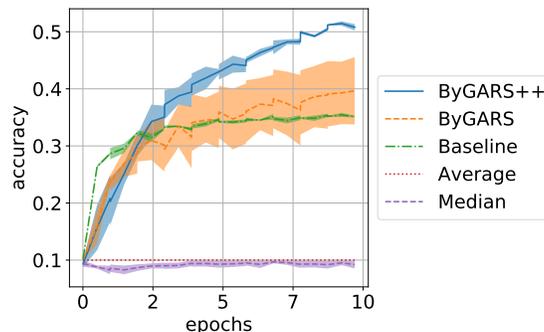}
    \vspace*{-1em}
    \caption {Comparison of the top-1 accuracy of ByGARS and ByGARS++ using CIFAR-10 dataset with one benign worker and seven attackers using different attack strategies.  The seven attackers include one Gaussian adversary, two Sign flip adversaries, one random sign flip adversary, two label flip adversaries, and one constant value adversary. See Section~\ref{sec:simulations} for more details.}

    \label{fig:my_label}
\end{figure}

\begin{table*}[t]
\caption{Summary of various attacks that ByGARS or  ByGARS++ is robust to. Extensive simulations suggests that the proposed algorithms are resilient to most of the state-of-the-art attacks with any number of Byzantine adversaries; the checkmarks in the right column indicates that in simulations, we have found either ByGARS or ByGARS++ is able to perform SGD under a wide range of initial conditions. Here, $f$ denotes the fraction of Byzantine adversaries in the system, with $f=1$ implying that all workers are adversarial.}
\label{table:attack summary}
\vskip 0.15in
\begin{center}
    \begin{tabular}{|c| c | ccc |}
        \hline
           &  & \multicolumn{3}{c}{Fraction of Adversaries $f$} \vline\\
        Type & Attack & $f< 0.5$ & $f\in[0.5,1)$ & $f = 1$ \\
        \hline
        Omniscient / Collusion & Inner Product Manipulation \cite{xie2019fall} & $\surd$  & $\surd$  & \\
        \hline    
        Omniscient / Collusion & LIE \cite{baruch2019little} & $\surd$ & -  & \\
        \hline    
        Omniscient / Collusion & OFOM \cite{chang2019cronus} & $\surd$  & $\surd$  &  \\
        \hline    
        Omniscient / Collusion & PAF \cite{chang2019cronus}  & $\surd$  & $\surd$   & \\
        \hline    
        Local / Failure & Sign Flip/Reverse Attack & $\surd$  & $\surd$  & $\surd$  \\
        \hline    
        Local / Failure & Random Sign Flip Attack  & $\surd$  & $\surd$ & $\surd$  \\
        \hline    
        Local / Failure & Gaussian Attack \cite{blanchard2017machine} & $\surd$  & $\surd$   & \\
        \hline    
        Local / Failure & Constant Attack \cite{li2019rsa} & $\surd$  & $\surd$ & \\
        \hline    
        Data Poisoning & Label Flipping & $\surd$  & $\surd$  & \\
        \hline    
        \textit{Mixed Attacks} & Multiple types of attacks  & $\surd$  & $\surd$ & $\surd$ \\
        \hline    
    \end{tabular}%
\end{center}
\end{table*}

One of the main assumptions in past studies about Byzantine attacks in machine learning is that the number of adversarial workers is less than half of the total number of workers. These approaches relied on techniques like majority voting, geometric median, median of means, coordinate wise median, etc. to aggregate gradients at the server. The fundamental reason for this assumption was that the underlying concept of geometric median (often used for robust aggregation) has a breakdown point of 0.5 \cite{chen2017distributed}. In other words, it yields a robust estimator as long as less than half of the data (used for aggregation) is corrupted.

The assumption that less than half of the workers are adversarial might not be practical. A more challenging problem is to ensure convergence even in the presence of a large number of adversaries. Some prior works that address this case are \cite{xie2018zeno, jin2019distributed, cao2019distributed, xie2019zeno++}, among a few others. In these works, the server has some auxiliary data, which is used to identify adversarial workers and the gradients obtained from such workers are discarded at the server. In contrast, we address the issue of an arbitrary number of adversaries by allowing the server to use an auxiliary dataset (a small dataset drawn from the same distribution as the training data) to compute the {\it reputation score} of each worker that is used for gradient aggregation. The {\it reputation score} of a worker signifies how relevant the corresponding gradient direction is to the optimization problem. As we will see, we achieve robustness to any number of adversaries by letting the reputation score of the workers take negative values.

The key insight that allows us to train under any number of adversaries is that (a) the gradients are being computed for a specific objective function, and (b) the correct gradients would make a small angle with respect to the gradient computed using the auxiliary data with high probability (when the current parameters are far from the optimal ones). To see this, note that geometric median is a robust aggregator under any objective function. Thus, it necessarily requires a more stringent assumption on the number of adversaries to compute a reliable estimate. We explicitly use an unbiased estimate of the objective function (through the auxiliary dataset) in our algorithm, which eases the computation and allows us to not use general purpose robust aggregator like geometric median. Further, since the auxiliary dataset has (roughly) the same distribution as the original dataset, we expect that two stochastic gradients (computed on small mini-batches of the two datasets) would make a small angle with each other with high probability. On the other hand, a random vector will be almost orthogonal to the correct stochastic gradients due to the intrinsic properties of random high dimensional vectors. This intuition allows us to compute the reputation score for each worker reliably early on in the training process. Indeed, we show that our algorithm enjoys convergence under reasonable assumptions on the objective function and assuming multiplicative noise adversaries. Empirical evidence suggests that our algorithm is robust to a large class of Byzantine attacks (summarized in Table \ref{table:attack summary}), not just the ones with multiplicative noise adversaries. 

\textbf{Our Contributions:} In this work, we do not identify the adversarial workers and discard their gradients; instead, we use the auxiliary data at the server to compute a reputation score for the workers, and use the reputation scores to weigh the gradients of the workers to carry out the parameter update. Our primary contributions are: 
\begin{enumerate}[leftmargin=0.5cm]\itemsep -0.2em
    \item We propose a novel reputation score based gradient aggregation method for distributed machine learning with learnable reputation scores which may be of independent interest
    \item We show that our algorithm is Byzantine tolerant (in the sense of \cite{xie2019fall}) to an arbitrary number of attackers
    \item We use two time-scale stochastic approximation theory to establish the convergence of the proposed algorithm under reasonable assumptions (with strongly convex loss function)
    \item Empirical evidence on convex and non-convex objectives suggests that our proposed algorithms are robust to almost all state-of-the-art Byzantine attacks. We also show that our algorithms can defend mixed attacks (where multiple different attacks are performed at the same time). To the best of our knowledge, we are the first work to demonstrate such ability
    \item Previous works that depend on filtering out the adversaries required an estimate on the number of adversaries (often not available in practice), and thus the reduced effective data size for training affected the test error and the generalization ability of the trained model. Our proposed algorithms, on the other hand, do not require such assumptions and achieve better generalization as observed from simulations
    %
\end{enumerate}

\section{Related Work}
\label{sec:related_work}

\textbf{Byzantine SGD (fraction of adversaries $<$ 0.5):} Blanchard et al. \cite{blanchard2017machine} showed that the then existing approaches of aggregation of gradients using linear combinations were inadequate to deal with Byzantine adversaries. They showed experimentally that the presence of a single Byzantine adversary is enough to derail the training. They proposed an algorithm $\textit{Krum}$ and its variants and proved the convergence for non convex functions. Mhamdi et al. \cite{mhamdi2018hidden} proposed $L_p$ norm attacks that break Krum, and further proposed an algorithm (Bulyan) based on meta selectio. However, Bulyan is robust to less than $0.25$ fraction of adversaries. Chen et al. \cite{chen2017distributed} proposed a median of means algorithm to robustly aggregate the gradients, and showed rigorous convergence guarantees for strongly convex objective functions. Similarly, Yin et al. \cite{yin2018Byzantine} studied theoretical guarantees of coordinate wise median, and trimmed mean approaches. Alistarh et al.  Following the idea of coordinate-wise median, Yang et al. \cite{yang2019byzantine} use a Lipschitz inspired coordinate median algorithm to filter out the adversaries. Bernstein et al. \cite{bernstein2018signSGD} propose a simple, computationally cheap and efficient technique to use only the element wise sign of the gradients and perform a majority voting, but it's applicability against a wide range of attacks is unclear. Despite strong guarantees, the above median and majority inspired approaches fail to apply to an arbitrary number of attackers. \cite{alistarh2018byzantine} show optimal sample complexity for convex problems by using historical information of the gradients sent by workers even in high dimensions. Despite considering the historical information, this algorithm does not apply to an arbitrary number of attackers. 

\textbf{Byzantine SGD (arbitrary number of adversaries):}  By assuming availability of auxiliary data Cao \& Lai \cite{cao2019distributed} filter out Byzantine adversaries, and show convergence guarantees for strongly convex objective functions.  Under similar assumptions, Xie et al. \cite{xie2018zeno} propose $\textit{Zeno}$. They formulate a stochastic descent score similar to the Armijo condition for line search, to filter out adversarial workers, and prove convergence guarantees for non-convex objectives. A similar stochastic descent score is used in \cite{xie2019zeno++} under asynchronous settings. Jin et al. \cite{jin2019distributed} also study the case of arbitrary number of adversaries in an asynchronous setting. Although \cite{jin2019distributed} does not assume an auxiliary dataset, the algorithm requires each worker to have access to the gradients of all other workers, and using their own data the workers filter out the adversaries and use them benign workers for local aggregation and parameter update. Note that filtering out the gradients requires an estimate on the number of adversaries which leads to reduction in effective data size and loss of any useful information in the gradients (for example system failures). In contrast to these earlier works, instead of filtering out the adversaries, we assign a {\it reputation score} to each work which is used in aggregation. Ji et al. \cite{ji2019learning} also use an auxiliary dataset to aggregate gradients using an LSTM. In comparison, we use a simple reputation score based aggregation, with theoretical guarantees. Yao et al. \cite{yao2019federated} also compute an auxiliary gradient and use it for the update in the Federated Learning setting. However, the reputation scores of their algorithm are constant (fraction of samples) and don't address adversaries. To the best of our knowledge, our algorithm is the first one to use the idea of {\it reputation scores} to efficiently aggregate gradients, and show robustness to a wide range of attacks (also including multiple attacks by different workers at the same time).

\section{Problem setup}\label{sec:problem}

We consider distributed machine learning with a parameter server - worker setup. The parameter server maintains the model parameters, and updates the parameters with gradients received from the workers. We denote the model parameters by $\w\in\mathcal{W}\subset\mathbb{R}^d$, and the number of workers by $m$. We assume that each worker $j$ has access to dataset, $D_j :=\{x_i^j, y_i^j\}_{i=1}^{n_j} \sim \mathcal{D}$, where $N=\sum_{j}n_j$ is the total number of data points. In the Federated Learning scenario, this translates to each worker having its own dataset, which is not shared with anyone. In the distributed machine learning scenario, the server assigns the data partitions to the workers uniformly at random. Given a loss function $f(\cdot,x,y): \mathbb{R}^d \to \mathbb{R}$, $x,y \sim \mathcal{D}$, the objective is to minimize the population loss $F:\mathbb{R}^d\to \mathbb{R}$

\begin{equation}
\label{eq:poploss}
    \w_* = \arg\min_{\w \in \mathbb{R}^d} F(\w) := \expec_{x,y \sim \mathcal{D}}[f(\mathbf{w},x,y)]
\end{equation}

We denote the true gradient of the population loss at $\w_t$ by $\nabla F(\w_t)$. A good worker samples a subset of the data $\mathcal{D}_{j,t}\subset \mathcal{D}_j$, and computes a stochastic gradient     $\Tilde{h}_{t,j}:= \frac{1}{|\mathcal{D}_{j,t}|}\sum_{x,y\in \mathcal{D}_{j,t}}\nabla f(\w_t, x, y)$. The good workers communicate the stochastic gradient $h_{t,j}:=\tilde{h}_{t,j}$ to the server, where as adversarial workers inject a random multiplicative noise $\tilde{\kappa}_i$ and send $h_{t,j}:=\tilde{\kappa}_i \tilde{h}_{t,j}$, where $\tilde{\kappa}_i$ is a random variable that the adversary draws at each time step from a fixed attack distribution. We assume that this attack distribution remains fixed for the adversary throughout the training. We denote the set of gradients received by the server as $H_t^T = [h_{t,1}, \cdots, h_{t,m}] \in \mathbb{R}^{d\times m}$. Note that we assume a synchronous setting here, i.e. all the workers communicate the gradients at the same time to the server. We assume that the server has access to an auxiliary dataset $D_{aux}:= \{x_i,y_i\}_{i=1}^n \sim \mathcal{D}$. The server can sample a subset $\xi_{aux,t}$ of the auxiliary dataset and compute auxiliary loss $L_t(\w) = \frac{1}{|\xi_{aux,t}|}\sum_{(x,y)\in \xi_{aux,t}} f(\mathbf{w},x,y)$, such that $\E [\nabla L_t(\w_t)] = \nabla F(\w_t)$.

\section{Algorithm}

In this section, we motivate the importance of using a reputation score for gradient aggregation. In an ideal environment, where all the workers are benign, the gradient aggregation function simply averages the received stochastic gradients and uses the averaged gradient to update the parameters. If the batch sizes used by the workers are different, then a weighted averaging of the stochastic gradients is performed. However, our problem setup is far from ideal, it involves an arbitrary number of workers that act as Byzantine adversaries (potentially all of them can be adversarial). 

To compute a meaningful estimate of the gradient, the server maintains a reputation score $q_{t,j}$ for each worker $j$. Since the adversary can be of any type, the reputation scores can take any real value. Suppose, at time $t$, the reputation score vector is $\bq_t = [q_{t,1},\ldots,q_{t,m}]^T$ and the received gradients are $H_t$, then the weighted aggregation of the gradients with the reputation score is $H_t^T\bq_t = \sum_{i=1}^{m}q_{t,i}h_{t,i}$. If we have a good estimate of the reputation score $\bq_t$ (say we know $\boldsymbol \kappa$ and set $q_{t,i} = \frac{1}{\kappa_i}$ for all $i$) at each time $t$, then $-H_t^T\bq_t$ is a descent direction, then no adversary can affect the training provided that sufficiently small step size is used and the adversaries satisfy certain assumptions.

The problem now is to compute a good reputation score, without the knowledge of $\boldsymbol\kappa$, for the workers using only the gradients sent to the server. Making use of our two key assumptions -- availability of an auxiliary dataset and the stationary behavior of the workers, we propose two algorithms to compute the reputation score of the workers. 

\subsection{ByGARS}

\begin{algorithm}[t]
    \centering
    \caption{ByGARS: Byzantine Gradient Aggregation using Reputation Scores}\label{alg:bygars}
    \begin{algorithmic}[1]
        \State $\w_0$ \text{initialized randomly and sent to workers} 
        \State $\bq_0 = \pmb{0}$
        \For{$t=1,\cdots,T$}
            \State \text{receive} $H_t^T = [h_{t,1}, \cdots, h_{t,m}]$ from workers
            \State $\bq_{t+1}^0 = \bq_t$
             \For{$i=1,\cdots,k$}
                \State ${\widehat{\mathbf{w}}}_{t+1} \leftarrow \mathbf{w}_t -\gamma_t H_t^T \bq_{t+1}^{i-1}~:= \w_t - \gamma_t \sum_{j=1}^{m}q_{t,j}^{i-1}h_{t,j}$ \Comment{perform pseudo update}
                \State $\bq_{t+1}^{i} \leftarrow \bq_{t+1}^{i-1} + \alpha_t \gamma_t H_t\nabla L_t(\widehat{\mathbf{w}}_{t+1})$ \Comment{perform meta update}
            \EndFor
            \State $\bq_{t+1}=\bq_{t+1}^{k} $ 
            \State $\w_{t+1}\leftarrow \w_{t} -\gamma_t H_t^T\bq_{t+1}$\Comment{perform actual update on parameters}
            \State Send $\w_{t+1}$ to workers
        \EndFor
        \State \textbf{Return} $\w_{T+1}$
    \end{algorithmic}
\end{algorithm}

We start with an initial reputation score of $\bq_0=\pmb{0}\in\mathbb{R}^m$, and iteratively improve the estimate of the reputation score. At step $t$, we perform a \textit{pseudo update} to $\w_t$ ($\gamma_t$ is a step size parameter) as 
\begin{equation}
    \label{eqn:bygars_pseudoupdate}
    \hat{\w}_{t+1} \leftarrow \w_t - \gamma_t H_t^T\bq_t 
\end{equation}
If $\bq_t$ is a good reputation score and $\gamma_t$ is sufficiently small, then $-H_t^T\bq_t$ is a descent direction and thus $F(\hat{\w}_{t+1})$ must be lower in value than $F(\w_t)$ or other points in its neighborhood. However, we neither have access to the true function $F$ nor the data from the workers. Instead, we have a small auxiliary dataset that is drawn from the same distribution as the data at the workers. This auxiliary dataset allows us to construct the loss function $L_t(\cdot)$ (see Section \ref{sec:problem}), and we can solve the following optimization problem to compute a better reputation score $\bq_{t+1}^*$:
\begin{equation}\label{eqn:metaopt}
    \bq_{t+1} = \arg\min_{\bq\in\mathbb{R}^m} L_t(\w_t - \gamma_t H_t^T\bq)
\end{equation}

Using the current estimate $\bq_t$, we use an iterative update rule. We compute the loss on a random mini-batch of the auxiliary dataset $D_{aux}$ using $\hat{\w_t}$, which is denoted as $L_t(\hat{\w_t})=L_t(\w_t - \gamma_t H_t^T\bq_t)$, and henceforth referred to as auxiliary loss. The objective of the \textit{meta update} is to minimize this loss with respect to $\bq$, given $H_t, \w_t$. We do this by performing a first order update to $\bq_t$ by computing the gradient of the auxiliary loss evaluated at $\hat{\w}_t$ wrt $\bq_t$. We refer to this gradient as the auxiliary gradient denoted by $\nabla L_t (\hat{\w}_t)$.  The gradient computation and \textit{meta update} to $\bq_t$ ($\alpha_t$ is a step size parameter) is given by

\begin{align}\label{eqn:metaupdate}
    \begin{split}
        \bq_{t} \leftarrow~& \bq_{t} - \alpha_t \frac{d}{d\bq_t} L_t(\w_t - \gamma_t H_t^T\bq_t)\\
        &= \bq_t - \alpha_t (-\gamma_t H_t) \nabla L_t(\w_t -\gamma_t H_t^T\bq_t) \\
        &= \bq_t + \alpha_t \gamma_t H_t \nabla L_t(\hat{\w_t})
    \end{split}
\end{align}

The updated reputation score is used to find the updated gradient aggregation $H_t^T\bq_t$. The algorithm proceeds by successively applying the \textit{pseudo update} (eq \ref{eqn:bygars_pseudoupdate}) and \textit{meta update} to $\bq_t$ (eq \ref{eqn:metaupdate}) for $k$ iterations (or until a stopping criteria is reached, such as sufficient descent) to obtain $\bq_{t+1}$ before finally performing an actual update to $\w_t$ as
\begin{equation}
    \label{eqn:bygars_actualupdate}
    \w_{t+1} \leftarrow \w_t - \gamma_t H_t^T\bq_{t+1} 
\end{equation}
This is summarized in Algorithm~\ref{alg:bygars} (please note the change in notation, eg. superscript $i$, due to the meta updates).

The core of our algorithm lies at the auxiliary dataset available to the worker. It is a reasonable assumption to make since in practical scenarios, it is not difficult to procure a small amount of clean auxiliary data without violating the privacy of the worker's data. This data can be taken from publicly available datasets (that match the distribution of the data available at the workers), from prior data leaks (that is now publicly available), or data given voluntarily by workers. In supplementary material, we provide more analysis on the affect of auxiliary dataset size on the performance of our algorithms.

\subsection{ByGARS++: Faster ByGARS}

Algorithm~\ref{alg:bygars} has an additional computational overhead due to multiple parameter updates and multiple gradient computations to update the reputation score in the meta updates. This increased computation at the server keeps the workers idle and waiting, thus negating the computational speed-up achieved from distributed learning. In order to overcome this limitation, and driven by the motivation of ByGARS, we propose a variant which is computationally cheaper yet efficient.

\begin{algorithm}[t]
    \centering
    \caption{ByGARS++}\label{alg:lite}
    \begin{algorithmic}[1]
        \State $\w_0$ \text{initialized randomly and sent to workers} 
        \State $\bq_0 = \pmb{0}$
        \For{$t=1,\cdots,T$}
            \State \text{Receive} $H_t^T = [h_{t,1}, \cdots, h_{t,m}]$ from workers
            \State \text{Compute $\nabla L_t(\w_t)$ using a subset of the auxiliary data}
            \State $\w_{t+1} \leftarrow \w_t - \gamma_t H_t^T \bq_t$ 
            \State Send $\w_{t+1}$ to workers
            \State $\bq_{t+1} \leftarrow (1-\alpha_t)\bq_{t} + \alpha_t H_t\nabla  L_t(\mathbf{w}_{t})$ \Comment{Computed when workers compute gradients}
        \EndFor
        \State \textbf{Return} $\w_{T+1}$
    \vfill
    \end{algorithmic}

\end{algorithm}

We propose Algorithm \ref{alg:lite} (ByGARS++), in which we avoid computing multiple \textit{pseudo updates} $\hat{\w}_t$ used for performing \textit{meta updates}, by simulataneously updating $\w_t, \bq_t$ as given by eq (\ref{eqn:updatelite}). Note that we perform an update to $\bq_t$ using the auxiliary gradients evaluated at $\w_t$ (and not at $\hat{\w}_t$). 

\begin{align}\label{eqn:updatelite}
    \begin{split}
        \w_{t+1} &\leftarrow \w_t -\gamma_t H_t^T\bq_t, \\ \bq_{t+1} &\leftarrow (1-\alpha_t)\bq_t + \alpha_t H_t \nabla L_t(\w_t)
    \end{split}
\end{align}

In this case, the reputation score of each worker is updated using only the inner product between the gradient sent by the worker, and the auxiliary gradient, both evaluated at $\w_t$. The only additional computation as compared to traditional distributed SGD is the udpate of $\bq_t$ which takes $\mathcal{O}(md)$ time. However, the server can update $\bq_t$ when the workers are computing the gradients for next time step (line 7, 8 of Algorithm~\ref{alg:lite}), therefore ByGARS++ has the same computational complexity as traditional distributed SGD.

If a worker consistently sends the gradient scaled with a negative value, then the reputation score accumulates negative values, since the inner product is negative in expectation. Therefore by multiplying the received gradient with $q_{t,j}$ we can recover the actual direction. When the parameters are far away from the optima, we compute the reputation score of each stochastic gradient by taking an inner product with the stochastic gradient computed on the auxiliary data. When the parameters are closer to the optima, the inner product value is random \cite{chee2018convergence} (since the directions of the stochastic gradients are random), and hence does not contribute to the reputation score. If unheeded, this phenomenon can destroy the reputation of good workers/boosts that of adversaries, therefore we employ a decaying learning rate schedule for both $\gamma_t$ and $\alpha_t$. Thus, by the time the parameters are close enough to the optima or a flat region (in non-convex settings), the learning rates would have decayed significantly. This enables the reputation score to accumulate over time and converge; therefore, the score is robust to the noisy inner products near the optima.

\section{Convergence of ByGARS++}
We now analyze the convergence of ByGARS++, which follows the update equation (\ref{eqn:updatelite}).


\begin{Assumption}\label{ass:strongly}
The population loss $F$ is $c$-strongly convex,  with $\w_*$ as the unique global minimum, such that $\nabla F(\w_*)=0$. Further, $\nabla F$ is a locally Lipschitz function with bounded gradients.
\end{Assumption}

\begin{Assumption}
The Byzantine adversaries corrupt the gradients using multiplicative noise. If the worker $i$ computes a stochastic gradient $\tilde{h}_{t,i}$ which is an unbiased estimate of $\nabla F(\w_t)$, the worker sends $h_{t,i}:=\tilde{\kappa}_{t,i} \tilde{h}_{t,i}$ to the parameter server, where $\tilde{\kappa}_{t,i}$ is an iid multiplicative noise with mean $\kappa_i$ and finite second moment. The random noise satisfies  $|\tilde{\kappa}_{t,i}|\leq \kappa_{\max}$ almost surely for all the workers. The workers have the following types:\vspace{-.5em}
\begin{enumerate}[leftmargin=0.5cm]\itemsep -0.2em
    \item Benign worker: $\E h_{t,i} = \nabla F(\w_t)$ with $\kappa_i=1$;
    \item Scaled adversary: $\E h_{t,i} = \kappa_i \nabla F(\w_t)$, where $\kappa_i$ is a real number (negative or positive);
    \item Random adversary: $\E h_t = 0$,where adversary sends random gradients with mean 0 (i.e. $\kappa_i = 0$).  
\end{enumerate}
There is at least one benign or scaled adversary with $\kappa_i\neq 0$ among the workers. Further, we assume the adversaries' noise distributions do not change with time.
\end{Assumption}
 This is a reasonable assumption as the goal of the attacker is to derail the training progress by corrupting the aggregate gradient so that it is not a descent direction. This also includes system failures, where the sign bit of the gradient is flipped erroneously during communication \cite{xie2018zeno}. This adversary model is used in several works including \cite{bernstein2018signSGD,li2019rsa,xie2019zeno++}. However, we show empirical results for different types of attacks (summarized in Table~\ref{table:attack summary}).
 
The following theorems capture the main result of this paper.
\begin{theorem}\label{thm:byzresilient}
If Assumption 2 is satisfied, then ByGARS++ is DSSGD-Byzantine Tolerant (Def. 4 in \cite{xie2019fall}), that is, $\E[\dott{\nabla F(\w_t), H^Tq_t}] \geq 0$.
\end{theorem}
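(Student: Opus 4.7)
The plan is to pass to a conditional expectation first, then control the sign of the resulting alignment term via the recursive structure of the reputation update. Let $\F_t$ denote the $\sigma$-algebra containing $\w_t$, $\bq_t$, and all randomness up to and including step $t-1$. The fresh worker mini-batches at time $t$ and the auxiliary mini-batch are independent of $\F_t$, and the multipliers $\tilde\kappa_{t,i}$ are independent of $\tilde h_{t,i}$ with mean $\kappa_i$; hence $\E[h_{t,i}\mid \F_t] = \kappa_i \nabla F(\w_t)$. Since $\bq_t$ is $\F_t$-measurable, linearity gives
\begin{align*}
\E\!\left[\dott{\nabla F(\w_t), H_t^T\bq_t}\,\middle|\,\F_t\right] = \|\nabla F(\w_t)\|^2 \sum_{i=1}^m \kappa_i\, q_{t,i}.
\end{align*}
The claim thus reduces to showing $\E\big[\|\nabla F(\w_t)\|^2 \sum_i \kappa_i q_{t,i}\big]\geq 0$.

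Next, I would unroll the meta-update $\bq_{s+1}=(1-\alpha_s)\bq_s + \alpha_s H_s\nabla L_s(\w_s)$ from $\bq_0=\mathbf{0}$, with $\alpha_s\in(0,1)$, to obtain $q_{t,i} = \sum_{s=0}^{t-1}\beta_s^{(t)}\dott{h_{s,i}, \nabla L_s(\w_s)}$ where $\beta_s^{(t)} := \alpha_s\prod_{r=s+1}^{t-1}(1-\alpha_r)\geq 0$. For each $s$, the worker mini-batch, the auxiliary mini-batch, and $\tilde\kappa_{s,i}$ are mutually independent given $\F_s$, so $\E[\dott{h_{s,i},\nabla L_s(\w_s)}\mid\F_s] = \kappa_i\|\nabla F(\w_s)\|^2$. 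Summing against $\kappa_i$ and applying the tower property yields
\begin{align*}
\E\!\left[\sum_{i=1}^m \kappa_i\, q_{t,i}\right] = \|\bk\|^2 \sum_{s=0}^{t-1}\beta_s^{(t)}\, \E[\|\nabla F(\w_s)\|^2]\geq 0,
\end{align*}
which, since $\|\bk\|^2>0$ by the assumption that at least one worker has $\kappa_i\neq 0$, shows that the aggregated reputation $\sum_i\kappa_i q_{t,i}$ is aligned with $\bk$ on average.

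The main obstacle is that $\|\nabla F(\w_t)\|^2$ and $\sum_i \kappa_i q_{t,i}$ are correlated through the joint evolution of $\w_s$ and $\bq_s$ for $s\le t$, so non-negativity of each factor in expectation does not directly imply non-negativity of their product. I would resolve this by invoking the two-timescale structure of ByGARS++: with $\alpha_t$ chosen much larger than $\gamma_t$ so that $\gamma_t/\alpha_t\to 0$, standard two-timescale stochastic approximation (the same tool the paper invokes for convergence) shows that $\bq_t$ tracks the instantaneous fixed point $(\kappa_i\|\nabla F(\w_t)\|^2)_{i=1}^m$ on the fast scale, giving $\sum_i\kappa_i q_{t,i} = \|\bk\|^2\|\nabla F(\w_t)\|^2 + \varepsilon_t$ with $\E[|\varepsilon_t|]$ small. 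Substituting into the conditional expectation then produces a pointwise lower bound of the form $\|\bk\|^2\|\nabla F(\w_t)\|^4$ minus a vanishing correction, from which the unconditional inequality follows. Quantifying $\varepsilon_t$ is where most of the technical work lies; the boundedness $|\tilde\kappa_{t,i}|\le\kappa_{\max}$ together with the locally Lipschitz bounded-gradient condition of Assumption 1 would be used to control the drift of $\bq_t$ between consecutive fast-timescale updates.
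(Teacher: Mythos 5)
Your opening two steps coincide exactly with the paper's proof: conditioning on $\F_t$ to obtain $\E[\dott{\nabla F(\w_t),H_t^T\bq_t}\mid\F_t]=\bk^T\bq_t\norm{\nabla F(\w_t)}^2$, and unrolling the recursion from $\bq_0=\pmb{0}$ with the nonnegative weights $\alpha_s\beta_{s,t-1}$, $\beta_{s,t}=\prod_{k=s+1}^{t}(1-\alpha_k)$, which is precisely the paper's equation for $\E[\bk^T\bq_{t+1}\mid\F_t]$. The genuine gap lies in how you resolve the correlation between $\norm{\nabla F(\w_t)}^2$ and $\bk^T\bq_t$. Your proposed fix---a two-timescale tracking argument giving $\bq_t\approx\bk\norm{\nabla F(\w_t)}^2+\varepsilon_t$---cannot deliver the theorem as stated, for three reasons. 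First, Theorem 1 assumes only Assumption 2; it posits no stepsize conditions ($\gamma_t/\alpha_t\to 0$, square-summability) and no strong convexity, so the machinery you invoke (which the paper reserves for Theorem 2) is not among the available hypotheses. Second, tracking results are asymptotic, whereas $\E[\dott{\nabla F(\w_t),H_t^T\bq_t}]\geq 0$ is asserted for every $t$, including early iterations where $\bq_t$ is nowhere near its fast-timescale fixed point. Third, even asymptotically your bound reads ``main term $\norm{\bk}^2\norm{\nabla F(\w_t)}^4$ minus an unquantified correction,'' and near $\w_*$ the main term vanishes as well, so without a rate comparison between the two the sign is undetermined---you explicitly defer exactly that estimate.

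The paper's resolution is far more elementary and needs none of this: substitute the unrolled sum for $\bk^T\bq_t$ directly inside the product, obtaining
\begin{equation*}
\E[\dott{\nabla F(\w_t),H_t^T\bq_t}]
=\E\bigl[\bk^T\bq_t\norm{\nabla F(\w_t)}^2\bigr]
=\sum_{s=0}^{t-1}\alpha_s\beta_{s,t-1}\,\norm{\bk}^2\,\E\bigl[\norm{\nabla F(\w_s)}^2\norm{\nabla F(\w_t)}^2\bigr],
\end{equation*}
where each summand is $\norm{\bk}^2$ times the expectation of a pointwise product of two squared norms, hence nonnegative. The observation you missed is that the only sign-carrying randomness is the multiplicative noise $\tilde\kappa_{s,i}$, which enters linearly with coefficient $\kappa_i$ and, once averaged at time $s$, appears as $\kappa_i^2$; the dependence of $\w_t$ on the past that you flagged can alter the magnitudes $\E[\norm{\nabla F(\w_s)}^2\norm{\nabla F(\w_t)}^2]$ but never their sign, so no decorrelation, tracking, or extra assumptions are needed. (In fairness, the interchange of expectations producing the display above implicitly conditions on $\F_s$ even though $\w_t$ is not independent of the time-$s$ noise, so your instinct that this point needs care is sound; but the intended---and much shorter---argument is the sign-via-squares one, closed by iterated expectation and induction, not an approximation argument.)
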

\begin{proof}
The proof follows by applying the law of iterated expectations. We refer the reader to the appendix for a detailed proof.
\end{proof}
Thus, by accruing reputation scores using the inner products for every worker, the sign of $q_{t,i}$ is the same as that of $\kappa_i$. Due to this reason, even gradients of some adversaries can be helpful in training if multiplied with an appropriate reputation score. When the loss function is strongly convex and smooth, we can prove an even stronger result, which is captured below.
\begin{theorem}\label{thm:main}
Suppose that $\{\alpha_t\},\{\gamma_t\}$ are diminishing stepsizes, that is, $\sum\alpha_t=\infty, \sum\gamma_t=\infty,\sum\alpha_t^2<\infty,\sum\gamma_t^2<\infty$, with $\gamma_t/\alpha_t\rightarrow 0$ as $t\rightarrow\infty$. If Assumptions 1 and 2 are satisfied and $\sup_{t}\|\w_t\|,\sup_t\|\bq_t\|<\infty$, then $\{\w_t\}$ generated by \text{ByGARS++} converges almost surely $\w_*$.
\end{theorem}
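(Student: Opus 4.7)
The plan is to cast ByGARS++ as a coupled two-timescale stochastic approximation scheme in the sense of Borkar (Chapter 6) and invoke the standard convergence theorem for such schemes. Since $\gamma_t/\alpha_t \to 0$, the reputation score $\bq_t$ runs on the faster timescale and the parameter $\w_t$ on the slower one. From the fast iterate's viewpoint, $\w_t$ is quasi-static; from the slow iterate's viewpoint, $\bq_t$ sits near its fast attractor $\bq^*(\w_t)$.

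First, I would identify the fast attractor. Rewrite the $\bq$-update as
$$\bq_{t+1} = \bq_t + \alpha_t\bigl(H_t \nabla L_t(\w_t) - \bq_t\bigr).$$
Conditioning on $\w$ and using independence of the workers' stochastic gradients $\tilde h_{t,i}$, the multiplicative noises $\tilde\kappa_{t,i}$, and the server's auxiliary mini-batch (with $\E \nabla L_t(\w) = \nabla F(\w)$), the $i$-th coordinate of $\E[H_t\nabla L_t(\w)]$ equals $\kappa_i\|\nabla F(\w)\|^2$. Hence the frozen-$\w$ ODE $\dot\bq = \|\nabla F(\w)\|^2\bk - \bq$ is a linear contraction with unique globally exponentially stable equilibrium
$$\bq^*(\w) = \|\nabla F(\w)\|^2\,\bk.$$
Local Lipschitzness of $\bq^*(\cdot)$ then follows from local Lipschitzness of $\nabla F$ together with $\sup_t\|\w_t\|<\infty$.

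Second, I would derive the limiting slow ODE by substituting $\bq^*(\w)$ into the $\w$-update. Since $\E[h_{t,i}]=\kappa_i\nabla F(\w)$,
$$\E\bigl[H_t^T\bq^*(\w)\bigr] = \|\nabla F(\w)\|^2\sum_{i=1}^m \kappa_i\,\E[h_{t,i}] = \|\bk\|^2\,\|\nabla F(\w)\|^2\,\nabla F(\w),$$
giving the slow ODE
$$\dot\w = -\|\bk\|^2\,\|\nabla F(\w)\|^2\,\nabla F(\w).$$
This is a scalar-rescaled gradient flow, so taking $V(\w)=F(\w)-F(\w_*)$ yields $\dot V = -\|\bk\|^2\|\nabla F(\w)\|^4\le 0$, with equality only at $\w_*$ by $c$-strong convexity. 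Hence $\w_*$ is the unique globally asymptotically stable equilibrium of the slow ODE. With the remaining hypotheses of the two-timescale theorem verified---the stepsize conditions are assumed; the noise sequences $H_t\nabla L_t(\w_t)-\|\nabla F(\w_t)\|^2\bk$ and $H_t^T\bq_t - \E[H_t^T\bq_t\mid\mathcal{F}_t]$ are martingale differences with uniformly bounded second moments by $|\tilde\kappa|\le\kappa_{\max}$, the bounded-gradient hypothesis on $\nabla F$, and the assumed boundedness of $\{\w_t\},\{\bq_t\}$; the drifts are Lipschitz on bounded sets---Borkar's theorem delivers $\|\bq_t-\bq^*(\w_t)\|\to 0$ a.s.\ and $\w_t\to\w_*$ a.s.

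The main obstacle I expect is the degeneracy of the slow drift at the minimizer: because it scales like $\|\nabla F\|^3$, the flow is not exponentially contractive and the textbook linear-rate Lyapunov template does not apply directly. The fix is to feed in the Polyak--Lojasiewicz inequality implied by $c$-strong convexity, $\|\nabla F(\w)\|^2\ge 2c(F(\w)-F(\w_*))$, which yields $\dot V\le -4c^2\|\bk\|^2 V^2$; this super-linear contraction still drives $V(t)\to 0$ and certifies the global asymptotic stability hypothesis needed to close the two-timescale argument. A secondary point to be careful about is the independence structure used when taking conditional expectations, which must be justified from the sampling model in Section~\ref{sec:problem} (fresh auxiliary mini-batch independent of the workers' mini-batches and of the multiplicative noises).
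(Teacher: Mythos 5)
Your proposal is correct and follows essentially the same route as the paper: identify the fast attractor $\bq^*(\w)=\bk\|\nabla F(\w)\|^2$ for the frozen-$\w$ linear ODE, substitute it into the slow ODE to get the rescaled gradient flow $\dot\w=-\|\bk\|^2\|\nabla F(\w)\|^2\nabla F(\w)$ with Lyapunov function $V(\w)=F(\w)-F(\w_*)$ giving $\dot V=-\|\bk\|^2\|\nabla F(\w)\|^4$, verify the martingale-difference noise and local Lipschitz conditions, and invoke a standard two-timescale theorem (the paper uses Theorem~2 of Tadi\'c rather than Borkar's version, an immaterial difference). Your extra Polyak--Lojasiewicz step is a harmless refinement the paper does not need, since $\dot V<0$ for $\w\neq\w_*$ together with radial unboundedness of $V$ (from strong convexity) already certifies global asymptotic stability without any exponential rate.
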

The proof leverages two timescale stochastic approximation to establish the above result. The assumption of $\sup_{t}\|\w_t\|,\sup_t\|\bq_t\|<\infty$ is typical in stochastic approximation literature; see, for instance, \cite{borkar1997stochastic,borkar2009stochastic,tadic2004almost}. To avoid making this assumption, a typical workaround is to project the iterates back to a very large set in case the iterates go outside the set \cite{kushner2003stochastic,borkar2009stochastic}. For single timescale stochastic approximation, \cite{borkar2000ode} derives some sufficient conditions under which the iterates would be automatically bounded almost surely. However, we are unable to leverage this result since this method has not been extended to two timescale stochastic approximation. We now prove Theorem \ref{thm:main}.


\paragraph{Proof of Theorem \ref{thm:main}}
We use $\pmb{\kappa} = [\kappa_1,\ldots,\kappa_m]^T$ to denote the mean vector of the random multiplicative noises of the workers. We show that all the hypotheses of Theorem 2 of \cite{tadic2004almost} is satisfied by ByGARS++, which leads to the desired convergence result. Consider the two functions and the corresponding differential equations (here $\w,\bq$ are functions of continuous time and $\dot{\w}(t)$ denotes differentiation with respect to time $t$):
\begin{align}
    G_1(\w,\bq) &= - ( \bk^T \bq ) \nabla F(\w),\quad \dot{\w}(t) = G_1(\w(t),\bar \bq(t)) \label{eqn:g1w}\\ 
    G_2(\w,\bq) &= -\bq + \bk \|\nabla F(\w)\|^2, \quad \dot{\bq}(t) = G_2(\w,\bq(t)).\label{eqn:g2q}
\end{align}


\begin{lemma}\label{lem:diffeqn}
The differential equation in \eqref{eqn:g2q} has a unique globally asymptotically stable equilibrium, which is denoted by $\phi(\w)$ and is given by $\phi(\w) = \bk \|\nabla F(\w)\|^2$. The differential equation in \eqref{eqn:g1w} with $\bar\bq(t) = \phi(\w(t)) $ has a unique globally asymptotically stable equilibrium $\w_*$.
\end{lemma}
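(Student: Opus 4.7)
The plan is to handle the two differential equations separately, since the lemma neatly splits into a fast-timescale claim about $\bq$ and a slow-timescale claim about $\w$ after substitution.

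For the $\bq$-equation \eqref{eqn:g2q}, view $\w$ as a fixed parameter, so the ODE is linear with constant forcing: $\dot{\bq} = -\bq + c(\w)$, where $c(\w) := \bk\|\nabla F(\w)\|^2$. The unique equilibrium is obtained by setting $\dot{\bq}=0$, which gives $\phi(\w) = \bk\|\nabla F(\w)\|^2$. To get global asymptotic stability I would use the quadratic Lyapunov function $V_1(\bq) = \tfrac{1}{2}\|\bq-\phi(\w)\|^2$; a one-line computation yields $\dot{V}_1 = -\|\bq-\phi(\w)\|^2 = -2V_1$, so $V_1$ decays exponentially and $\bq(t)\to\phi(\w)$ from any initial condition. (Equivalently, the linearity gives the closed-form $\bq(t) = e^{-t}(\bq(0)-\phi(\w)) + \phi(\w)$.)

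For the $\w$-equation \eqref{eqn:g1w} with $\bar\bq(t) = \phi(\w(t))$, substituting yields the reduced ODE
\begin{equation*}
    \dot{\w}(t) = -\|\bk\|^2 \,\|\nabla F(\w(t))\|^2 \,\nabla F(\w(t)).
\end{equation*}
The hypothesis in Assumption 2 that at least one worker has $\kappa_i\neq 0$ ensures $\|\bk\|^2>0$, so the scalar factor in front of $\nabla F$ is strictly positive whenever $\nabla F(\w)\neq 0$. I would take the Lyapunov function $V_2(\w) = F(\w) - F(\w_*)$. Strong convexity (Assumption 1) implies $V_2(\w)\geq \tfrac{c}{2}\|\w-\w_*\|^2$, so $V_2$ is positive definite at $\w_*$ and radially unbounded, and $\nabla F(\w)=0$ iff $\w=\w_*$. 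Computing along trajectories,
\begin{equation*}
    \dot{V}_2(\w) = \nabla F(\w)^\top \dot{\w} = -\|\bk\|^2\,\|\nabla F(\w)\|^4 \leq 0,
\end{equation*}
with equality only at $\w=\w_*$. Hence $V_2$ is a strict Lyapunov function and $\w_*$ is the unique globally asymptotically stable equilibrium; existence and uniqueness of solutions for all $t\geq 0$ follow from local Lipschitzness of $\nabla F$ combined with the fact that $V_2(\w(t))$ is nonincreasing, which confines trajectories to the sublevel set $\{\w : F(\w)\leq F(\w(0))\}$ — a compact set by strong convexity.

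The main obstacle, such as it is, is just making sure the constant $\|\bk\|^2$ does not vanish (so that the slow-timescale ODE actually has $\nabla F$ as its descent direction); this is precisely the content of the ``at least one $\kappa_i\neq 0$'' clause in Assumption 2 and deserves an explicit line in the write-up. Everything else reduces to two textbook Lyapunov arguments.
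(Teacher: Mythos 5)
Your proof is correct and follows essentially the same route as the paper's: the closed-form solution $\bq(t) = e^{-t}(\bq(0)-\phi(\w)) + \phi(\w)$ for the linear $\bq$-equation, and the Lyapunov function $V(\w) = F(\w)-F(\w_*)$ with $\dot V = -\|\bk\|^2\|\nabla F(\w)\|^4 < 0$ for $\w \neq \w_*$ for the reduced $\w$-equation. Your additions — explicitly noting that $\|\bk\|^2 > 0$ by Assumption 2, the radial unboundedness of $V$ from strong convexity, and forward completeness of trajectories via compact sublevel sets — are points the paper leaves implicit, and they tighten the argument without changing it.
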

\begin{proof}

To see the first result, not that for any $\w$, the solution to the differential equation is $\bq(t) = (\bq(0)-\bk \|\nabla F(\w)\|^2)\exp(-t)+\bk \|\nabla F(\w)\|^2$, which is globally asymptotically stable with $\phi(\w) = \bq(\infty) = \bk \|\nabla F(\w)\|^2$. We now use Lyapunov stability theory to establish the second statement. Let us substitute into \eqref{eqn:g1w} $\bar\bq(t) = \phi(\w(t))$. Now define the Lyapunov function $V(\w) := F(\w) - F(\w_*)$, which is a valid Lyapunov function since $F$ is strongly convex by Assumption 1. We get
\begin{align*}
    &\nabla V(\w(t))^T G_1(\w(t),\phi(\w(t))) =  -\|\bk\|^2\|\nabla F(\w(t))\|^4<0 \text{ for all } \w(t)\neq \w_*.
\end{align*}
Thus, the differential equation has a unique globally asymptotically stable equilibrium where the Lyapunov function is 0, that is, at $\w_*$.
\end{proof}
Define $u_t = -G_1(\w_t,\bq_t) - H_t^T \bq_t$ and $v_t = H_t \nabla L_t(\w_{t})-\bk \norm{\nabla F(\w_t)}^2$. Using the definition of $u_t$ and $v_t$, the algorithm ByGARS++ is rewritten as
\begin{align}
    \begin{split}
    \w_{t+1} &= \w_t -\gamma_t H_t^T \bq_t \\
    &= \w_t + \gamma_t G_1(\w_t,\bq_t) + \gamma_t u_t
    \end{split}\\
    \begin{split}
    \bq_{t+1} &= (1-\alpha_t) \bq_{t} + \alpha_t H_t \nabla L_t(\w_{t})\\
    &= \bq_t + \alpha_t G_2(\w_t,\bq_t) + \alpha_t v_t.
    \end{split}
\end{align}
We show that $u_t$ and $v_t$ are martingale difference stochastic processes. Let $(\Omega,\mathcal F,\mathbb{P})$ be a standard probability space and let $\mathcal F_t$ be the $\sigma$-algebra generated by all the randomness realized up to time $t$:
\begin{align*}
    \begin{split}
        \mathcal F_t = \sigma\{ &\w_0,h_{1,0},\ldots,h_{m,0},\bq_0,\ldots,\w_{t-1},h_{1,t-1}, \ldots,h_{m,t-1},\bq_{t-1},\w_t,\bq_t\}.
    \end{split}
\end{align*}
It is easy to see that $\mathcal F_t\subset\mathcal F_{t+1}$, and thus, $\{\mathcal F_t\}_{t\in\mathbb{N}}$ is a filtration. It is straightforward now to observe that $\E [H_t|\F_t]=\bk \nabla F(\w_t)^T$, which implies that $\E[u_t|\F_t] = 0$. Further, it is easy to deduce that $\E[\|u_t\|^2|\F_t]\leq M(1+\|\bq_t\|^2)$ for a large $M>0$ that is dependent on the bounds on $\|\nabla F(\w_t)\|$ and $\|\kappa\|$. Thus, $u_t$ is a martingale difference noise. Next, observe that $H_t, \nabla L_t(\w_t)$ are independent given $\w_t, \bq_t$. Therefore $\E[H_t\nabla L_t(\w_t)|\F_t] = \E[H_t|\F_t]\E[\nabla L_t(\w_t)|\F_t] = \bk \norm{\nabla F(\w_t)}^2$. Again, it is easy to show that $\E[\|v_t\|^2|\F_t]\leq M(1+\|\bq_t\|^2)$. This implies $\{v_t\}$ is also a martingale difference noise.

It is clear from the expressions that since $\nabla F$ is locally Lipschitz, $G_1$ and $G_2$ are locally Lipschitz maps. In addition, $\phi(\w)$ is also a locally Lipschitz map. Theorem \ref{thm:main} now follows from the result in Lemma \ref{lem:diffeqn} and Theorem 2 of \cite{tadic2004almost}.


\section{Simulations}
\label{sec:simulations}

In this section, we discuss the experimental setup used to evaluate the proposed algorithms. 

\subsection{Dataset and models} 
\label{sec:datasets}

We present the results of our algorithms on MNIST \cite{lecun2010mnist}, CIFAR-10 \cite{krizhevsky2009learning} for multi-class classification using supervised learning. We used a LeNet architecture \cite{lecun1998gradient} for MNIST, and a two convolutional layer CNN for CIFAR-10. For each dataset, we set aside a small auxiliary dataset of size 250 (sampled randomly from the training data) at the server, and the remaining training data is distributed uniformly to the workers. More details on the hyperparameters, models, size of the auxiliary dataset, and results on a synthetic dataset are provided in the supplementary material.

\subsection{Attack mechanisms} 
\label{sec:attack_mechanisms}

The summary of the attacks used in this work is given in Table~\ref{table:attack summary}. The attacks were grouped broadly into (i) Omniscient / Collusion attacks, (ii) Local attacks / System failures, and (iii) Data Poisoning attacks. We further propose a \textit{mixed attack}, where we combine multiple Local attacks and Data Poisoning attacks. 

In the \textbf{Omniscient / Collusion} attacks, the adversaries have complete information about all other workers including the benign ones, or only about the other adversaries. In LIE attack \cite{baruch2019little}, the adversary adds well crafted perturbations to the empirical mean of the benign gradients that is sufficient to avoid convergence. In OFOM, PAF \cite{chang2019cronus} a large arbitrary vector is added to the empirical mean of the benign gradients and sent to the server. In Inner Product Manipulation, Xie et al. \cite{xie2019fall} multiply the empirical mean of benign gradients with a negative value.

In \textbf{Local} attacks, the attacker doesn't have any information about the other workers. Instead, the worker either sends an arbitrary gradient to the server or uses the gradient it computed. Examples of the first case include a Gaussian Attack (a vector drawn from a Gaussian distribution of mean 0, covariance 200) \cite{blanchard2017machine}, or Constant attack (a vector of all 1s multiplied by an arbitrary scalar, say 100) \cite{li2019rsa}. In the second case, the attacker computes a gradient on locally computed data, and sends a negatively scaled value of the gradient (this is termed as reverse attack, sign flipping attack in various works) \cite{li2019rsa,bernstein2018signSGD,jin2019distributed}. Hardware/communication failures that corrupt the gradients (unintentionally) by flipping the sign bit can also be included under these attacks. In addition to the sign flipping attack, we propose a {\it random} sign flip attack, where at each iteration the adversary draws a real number from a fixed distribution (optionally scales it with a large constant), multiplies with the local gradient and sends to the server. Note that the mean of the distribution can take negative as well as positive values. In our simulations we used a Gaussian distribution with mean randomly picked around -2 and variance 1.

In \textbf{Data Poisoning} attacks, the underlying data used to compute the gradients is poisoned so that the model outputs the attacker chosen targets during inference \cite{baruch2019little}. There are several varieties of data poisoning (also called backdooring) attacks, but we only consider label flipping attack in this work. Under the label flipping attack, for example in MNIST dataset, the attacker maps the labels as $l \to (9 - l)$ for $l \in \{0, \cdots, 9\}$, and uses these labels for computing the gradients. Note that, label flipping attack is also a Local attack. 
 
In addition to these attacks, we propose a \textit{Mixed Attack} where there can be multiple Local attacks and Data Poisoning attacks. This is motivated by the need to develop algorithms that are robust to several different types of attacks at the same time.

\subsection{Baselines}
\label{sec:baselines}

The generalization performance of SGD improves with the size of the dataset \cite{hardt2015train}. Since existing techniques for arbitrary number of Byzantines filter out the gradients that are sent by the adversarial workers, (in a true Federated Setting) the generalization performance of those techniques is limited by the number of benign workers and the amount of data available with them. In the case where all workers are adversaries, the only available truthful data is the auxiliary dataset. Hence, we consider plain averaging of the gradients available at these benign workers (and auxiliary gradient) as the \textit{Baseline}. Note that this is the \textbf{best} any Byzantine resilient algorithm that relies on filtering out adversarial gradients can do. Primarily for this reason, along with other implementation specific details required for other works that consider an arbitrary number of adversaries (Zeno requires trim parameter $b$ that needs knowledge of the number of adversaries), we chose to compare our algorithm with \textit{Baseline} (described above) as the gold standard. In addition to this, for illustration purposes, we also consider plain averaging of all gradients (no defense) denoted by \textit{Average}, and coordinate-wise median \cite{yin2018Byzantine} denoted by \textit{Median} in our empirical analysis.

\subsection{Implementation Details}
\label{sec:implementation}

Throughout this section, we will assume a setup with 1 server and 8 workers. We first compare the performance of the algorithms in the absence of any attacks (termed as \textit{No Attack}). In order to show the robustness of our proposed algorithms to an arbitrary number of attackers, we consider multiple cases with different number of attackers, i.e. 3, 6, 8. We consider the case of all 8 adversaries for Sign Flipping attack. As we will see, by allowing negative reputation scores for these workers, our algorithms will achieve similar performance as that of \textit{No Attack}. 

In order to avoid ~\texttt{NaN, Inf} values (due to adversaries), we normalize all the gradients (irrespective of adversarial or benign or auxiliary) during training. Specifically, we normalize the auxiliary gradients always to have unit norm, for ByGARS all worker gradients are normalized to unit norm, for ByGARS++ all worker gradients are normalized to value 2, and for the rest (\textit{Baseline, Average, Median}) all worker gradients are normalized to value 5.


\begin{figure*}[!tbh]
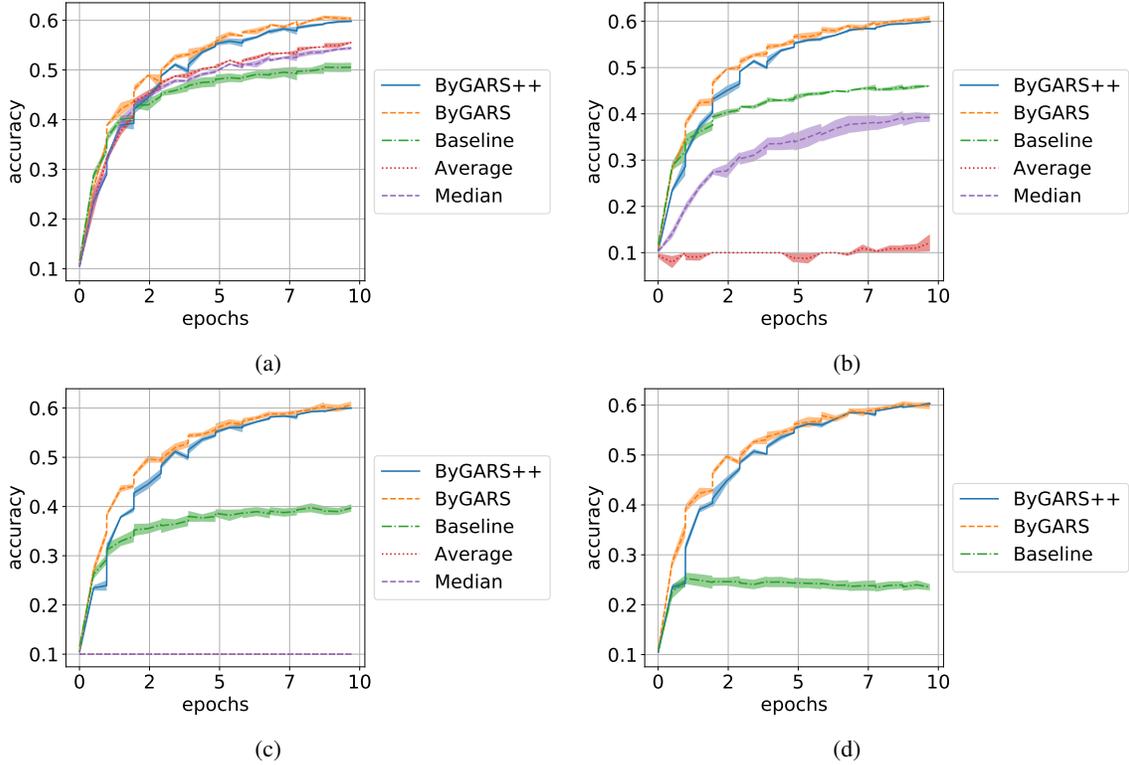

    \centering
    \subfloat[]{
        \label{fig:cifar_noattack}
        \includegraphics[width=0.45\textwidth]{sysmlimgs/cifar_no_attack.pdf}
    }
    \subfloat[]{
        \label{fig:cifar_signflip_5}
        \includegraphics[width=0.45\textwidth]{sysmlimgs/cifar_signflip_5.pdf}
    }\\
    \subfloat[]{
        \label{fig:cifar_signflip_2}
        \includegraphics[width=0.45\textwidth]{sysmlimgs/cifar_signflip_2.pdf}
    }
    \subfloat[]{
        \label{fig:cifar_signflip_0}
        \includegraphics[width=0.45\textwidth]{sysmlimgs/cifar_signflip_0.pdf}
    }
    \caption{\label{fig:cifarmain} This figure shows the top-1 accuracy of models trained on CIFAR-10 dataset under  (\ref{fig:cifar_noattack}) \textit{No Attack}; (\ref{fig:cifar_signflip_5}) 3 Sign flip adversaries; (\ref{fig:cifar_signflip_2}) 6 Sign flip adversaries;  (\ref{fig:cifar_signflip_0}) 8 Sign flip adversaries. Note that, despite the presence of 100\% adversaries, we are able to recover the performance of \textit{No Attack}. Also note that the only truthful gradients  available to the baseline in \ref{fig:cifar_signflip_0} is from the auxiliary data, and hence the worse performance.}
\end{figure*}

\begin{figure*}[!tbh]
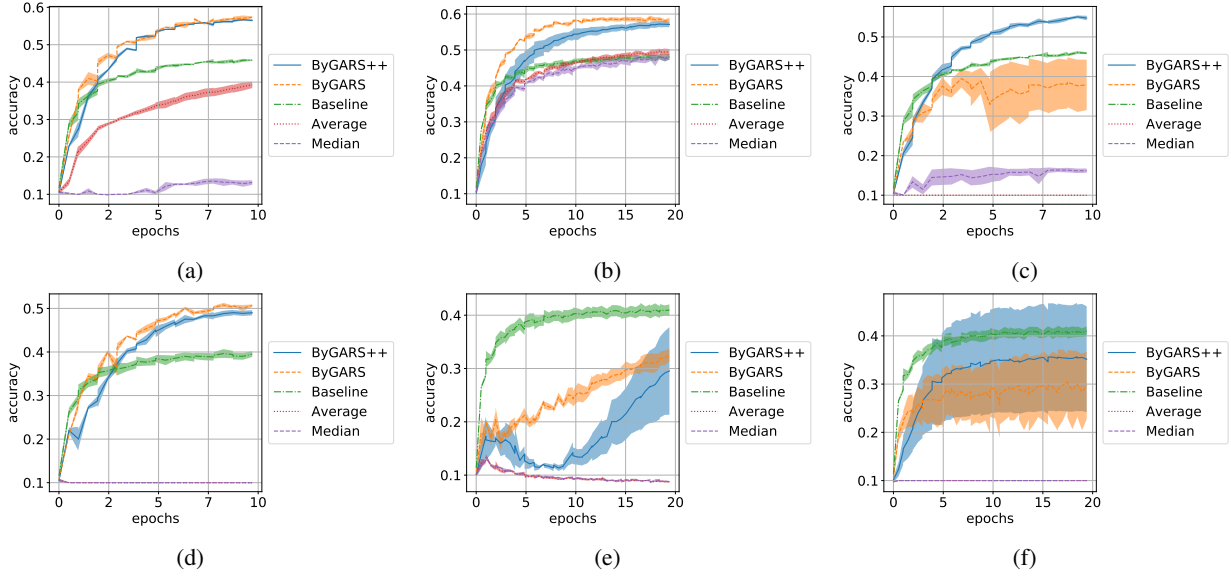

    \centering
    \subfloat[]{
        \label{fig:cifar_innerprod_5}
        \includegraphics[width=0.32\textwidth]{sysmlimgs/cifar_innerprod_5.pdf}
    }
    \subfloat[]{
        \label{fig:cifar_labelflip_5}
        \includegraphics[width=0.32\textwidth]{sysmlimgs/cifar_labelflip_5.pdf}
    }
    \subfloat[]{
        \label{fig:cifar_const_5}
        \includegraphics[width=0.32\textwidth]{sysmlimgs/cifar_const_5.pdf}
    }\\
    \subfloat[]{
        \label{fig:cifar_innerprod_2}
        \includegraphics[width=0.32\textwidth]{sysmlimgs/cifar_innerprod_2.pdf}
    }
    \subfloat[]{
        \label{fig:cifar_labelflip_2}
        \includegraphics[width=0.32\textwidth]{sysmlimgs/cifar_labelflip_2.pdf}
    }
    \subfloat[]{
        \label{fig:cifar_const_2}
        \includegraphics[width=0.32\textwidth]{sysmlimgs/cifar_const_2.pdf}
    }
    \caption{Top-1 accuracy for models trained on CIFAR-10 data under (\ref{fig:cifar_innerprod_5}) 3 Inner Product adversaries, (\ref{fig:cifar_innerprod_2}) 6 Inner Product adversaries, (\ref{fig:cifar_labelflip_5}) 3 Label Flip adversaries, (\ref{fig:cifar_labelflip_2}) 6 Label Flip adversaries, (\ref{fig:cifar_const_5}) 3 Constant Attack adversaries, and (\ref{fig:cifar_const_2}) 6 Constant Attack adversaries. Note the reduction in performance from the top row to the bottom row, due to the increase in the number of adversaries. Despite that, ByGARS, ByGARS++ perform reasonably well.}
    \label{fig:cifarplots_2}
\end{figure*}

\begin{figure*}[!tbh]
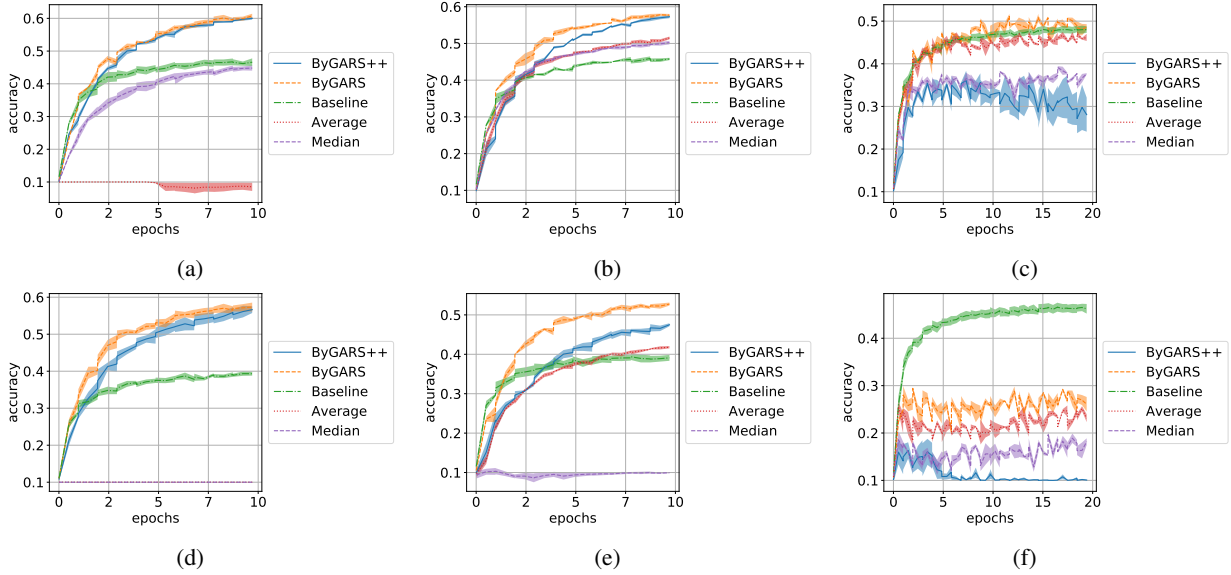

    \centering
    \subfloat[]{
        \label{fig:cifar_randomsignflip_5}
        \includegraphics[width=0.32\textwidth]{sysmlimgs/cifar_randomsignflip_5.pdf}
    }
    \subfloat[]{
        \label{fig:cifar_ofom_5}
        \includegraphics[width=0.32\textwidth]{sysmlimgs/cifar_ofom_5.pdf}
    }
    \subfloat[]{
        \label{fig:cifar_lie_5}
        \includegraphics[width=0.32\textwidth]{sysmlimgs/cifar_lie_5.pdf}
    }\\
    \subfloat[]{
        \label{fig:cifar_randomsignflip_2}
        \includegraphics[width=0.32\textwidth]{sysmlimgs/cifar_randomsignflip_2.pdf}
    }
    \subfloat[]{
        \label{fig:cifar_ofom_2}
        \includegraphics[width=0.32\textwidth]{sysmlimgs/cifar_ofom_2.pdf}
    }
    \subfloat[]{
        \label{fig:cifar_lie_4}
        \includegraphics[width=0.32\textwidth]{sysmlimgs/cifar_lie_4.pdf}
    }
    \caption{Top-1 accuracy for models trained on CIFAR-10 data under (\ref{fig:cifar_randomsignflip_5}) 3 Random Sign flip adversaries, (\ref{fig:cifar_randomsignflip_2}) 6 Random Sign flip adversaries, (\ref{fig:cifar_ofom_5}) 3 OFOM attackers, (\ref{fig:cifar_ofom_2}) 6 OFOM attackers, (\ref{fig:cifar_lie_5}) 3 LIE attackers, and (\ref{fig:cifar_lie_4}) 4 LIE attackers. ByGARS, ByGARSS++ defend well against random sign flip, and OFOM attackers in all scenarios. ByGARS performs well against LIE with less than 0.5 fraction of adversaries, but suffers some degradation when fraction of adversaries is 0.5. Against LIE, ByGARS++ fails miserably when the fraction of adversaries is 0.5.}
    \label{fig:cifarplots_3}
\end{figure*}

\begin{figure*}[!tbh]
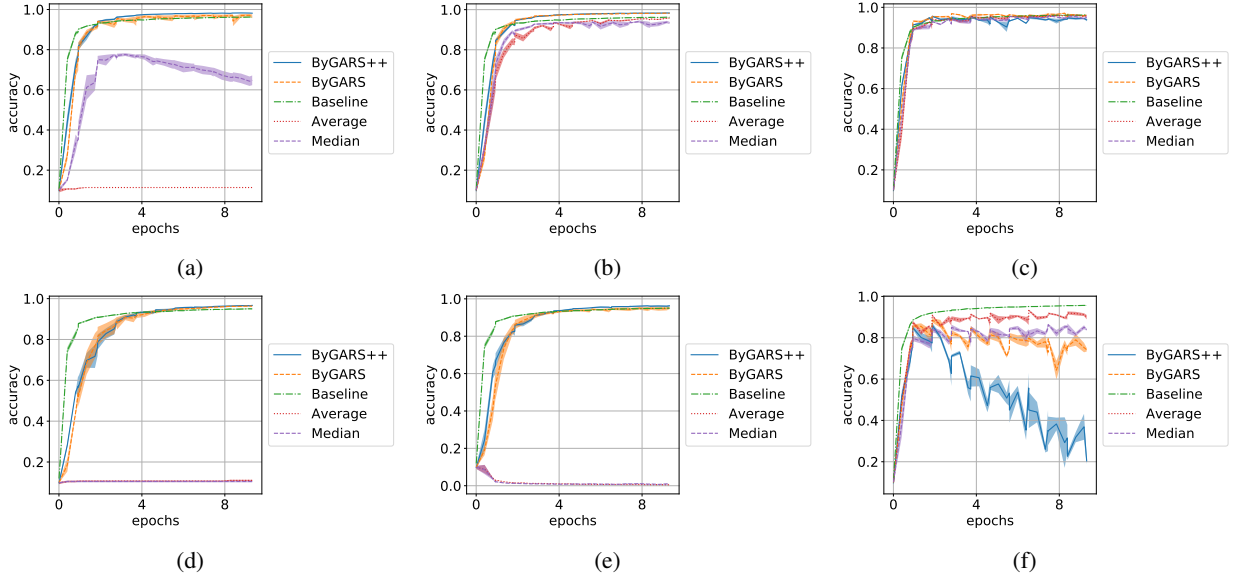

    \centering
    \subfloat[]{
        \label{fig:mnist_const_5}
        \includegraphics[width=0.32\textwidth]{sysmlimgs/mnist_const_5.pdf}
    }
    \subfloat[]{
        \label{fig:mnist_labelflip_5}
        \includegraphics[width=0.32\textwidth]{sysmlimgs/mnist_labelflip_5.pdf}
    }
    \subfloat[]{
        \label{fig:mnist_lie_5}
        \includegraphics[width=0.32\textwidth]{sysmlimgs/mnist_lie_5.pdf}
    }\\
    \subfloat[]{
        \label{fig:mnist_const_2}
        \includegraphics[width=0.32\textwidth]{sysmlimgs/mnist_const_2.pdf}
    }
    \subfloat[]{
        \label{fig:mnist_labelflip_2}
        \includegraphics[width=0.32\textwidth]{sysmlimgs/mnist_labelflip_2.pdf}
    }
    \subfloat[]{
        \label{fig:mnist_lie_4}
        \includegraphics[width=0.32\textwidth]{sysmlimgs/mnist_lie_4.pdf}
    }

    \caption{Top-1 accuracy for models trained on MNIST-10 data under (\ref{fig:mnist_const_5}) 3 Constant attackers, (\ref{fig:mnist_const_2}) 6 Constant attackers , (\ref{fig:mnist_labelflip_5}) 3 Label flip attackers, (\ref{fig:mnist_labelflip_2}) 6 Label flip attackers, (\ref{fig:mnist_lie_5}) 3 LIE attackers, and (\ref{fig:mnist_lie_4}) 4 LIE attackers. Similar to the case of CIFAR, both algorithms defend label flip and constant attacks. Against LIE, both algorithms defend well against <0.5 fraction of adversaries. When fraction of adversaries is 0.5, ByGARS performs reasonably well, where as ByGARS++ fails again.}
    \label{fig:mnistplots}
\end{figure*}

\subsection{Discussion} 
\label{discussion}

An advantage of our proposed methods is that, for a given dataset and model, we used the same learning rate, learning rate decay for all types of attacks, with the only difference being the meta learning rate and meta learning rate decay schedules for ByGARS, and ByGARS++. Also, from Fig~\ref{fig:cifar_noattack} it is evident that there is no trade-off in employing our algorithm in the case of \textit{No Attack}. This shows that our proposed algorithms can serve a much general purpose in distributed learning applications.

We can observe from Figures~(\ref{fig:cifarmain},\ref{fig:cifarplots_2},\ref{fig:cifarplots_3},\ref{fig:mnistplots}) that both ByGARS and ByGARS++ achieve Byzantine robustness against most of the threat models used under varying number of adversaries, with the exception of LIE attack. As expected, median fails to defend when the fraction of adversaries is $>0.5$ under all attacks. ByGARS, ByGARS++ on the other hand, do not see a lot of degradation wrt \textit{Baseline} when the fraction of adversaries is $>0.5$.  

Note that, LIE attack \cite{baruch2019little} was only defined for fraction of adversaries $\leq 0.5$, so we evaluated our algorithms against LIE attack with 3 attackers, and 4 attackers. We observe that, ByGARS is robust to 3 attackers for both CIFAR-10 and MNIST datasets, and suffers some degradation in performance when there are 4 attackers. ByGARS++, on the other hand, performs well against 3 attackers on MNIST data, but suffers some degradation on CIFAR-10. In the case of 4 attackers, ByGARS++ fails to defend against the attack. See Fig.~\ref{fig:synth} for experiments on a synthetic dataset (strongly convex objective) and we can see that both ByGARS, ByGARS++ successfully defend 4 \textit{LIE} adversaries. It is interesting to observe that \textit{Median, Average} (no defense) perform reasonably well against LIE with 3 attackers. This was also observed in \cite{baruch2019little} and the authors point that LIE is crafted to break defenses like Krum, Bulyan, etc. The authors explain that using plain averaging, the small noise added to the gradients gets averaged out and the impact on the aggregated gradient is minimal. Perhaps, ByGARS performs well against LIE due to the reputation score based aggregation (weighted averaging) as it was observed that the reputation scores were almost equal for all the workers in this case. 

It is important to note that, one would expect the \textit{Baseline} to be the best in all scenarios. However, we point out that by using the reputation scores, we are directly affecting the step size of each update performed, and hence it is not surprising to observe that our algorithms perform better than the \textit{Baseline} under \textit{No Attack} or weaker adversary models such as Sign Flip. Please refer to Appendix~\ref{sec:appendix_simulations} for more experiments on a synthetic dataset, ablations on the number of meta iterations, size of the auxiliary dataset, and the plots for reputation scores.

Note that the robustness of our algorithm comes from the fact that we did not design the defense based on any particular criteria such as norm, or majority ideas. We devised the algorithm in order for the optimization to find a descent direction, and hence the superior performance across a range of attacks. However, we observe that our algorithm is not robust to LIE attacks in one scenario (fraction of adversaries=0.5) which is currently the state of the art attack in Byzantine setting. We would like to leave investigating this issue as part of our future work. 


\section{Conclusion}

We devise a novel, Byzantine resilient, stochastic gradient aggregation algorithm for distributed machine learning with arbitrary number of adversarial workers. This is achieved by exploiting a small auxiliary dataset to compute a reputation score for every worker, and the scores are used to aggregate the workers' gradients. 
We showed that under reasonable assumptions, ByGARS++ converges to the optimal solution using a result from two timescale stochastic approximation theory \cite{tadic2004almost}. Through simulations, we showed that the proposed algorithms exhibit remarkable robustness property even for non-convex problems under a wide range of Byzantine attacks. Although ByGARS and ByGARS++ are developed for this specific setting of Byzantine adversaries, we believe that these algorithms serve a much general purpose. This algorithm can be modified to train models in other cases such as learning from heterogeneous datasets, learning under privacy constraints, other adversarial settings (such as injecting different noises to individual dimensions of the gradient), and poisoned data attacks. We leave such analyses for a future work.

\section*{Acknowledgements}
The authors thank ARPA-E NEXTCAR program and ARO Grant W911NF1920256 for supporting the research. Results presented in this poster were obtained using the Chameleon testbed supported by the National Science Foundation.

\small

\bibliographystyle{unsrt}  
\bibliography{ref}

\appendix

\section{Proof of Theorem 1}


In order to establish this result, we first define the filtration and then prove a lemma. Let $(\Omega,\mathcal F,\mathbb{P})$ be a standard probability space and let $\mathcal F_t$ be the $\sigma$-algebra generated by all the randomness realized up to time $t$:
\begin{align*}
   \F_0 =& \sigma\{\w_0,\bq_0\},\\
   \F_t =& \sigma\{\w_0,h_{1,0},\ldots,h_{m,0},\bq_0,\ldots,\w_{t-1},h_{1,t-1},\ldots,
   h_{m,t-1},\bq_{t-1},\w_t,\bq_t\}.
\end{align*}
It is easy to see that $\mathcal F_t\subset\mathcal F_{t+1}$, and thus, $\{\mathcal F_t\}_{t\in\mathbb{N}}$ is a filtration. We can now establish the following result.
\begin{lemma}
The following holds:
\begin{subequations}\label{eqn:qtkappa}
\begin{align}
\E[\bq_{t+1}|\mathcal F_t] &= (1-\alpha_t)\bq_t + \alpha_t \norm{\nabla F(\w_t)}^2 \bk \label{eqn:qtkappai}\\
\E[\kappa_i q_{t+1,i}|\mathcal F_t] &= (1-\alpha_t)\bk_i q_{t,i} + \alpha_t \norm{\nabla F(\w_t)}^2 \kappa_i^2 \label{eqn:qtkappaii}\\
\E[\bk^T\bq_{t+1}|\mathcal F_t] &= (1-\alpha_t)\bk^T \bq_t + \alpha_t \norm{\nabla F(\w_t)}^2 \|\bk\|^2\label{eqn:qtkappaiii}
\end{align}
\end{subequations}
\end{lemma}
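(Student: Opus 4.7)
The plan is to apply the ByGARS++ update rule $\bq_{t+1} = (1-\alpha_t)\bq_t + \alpha_t H_t \nabla L_t(\w_t)$ directly, conditioning on $\F_t$, and reduce everything to computing $\E[H_t \nabla L_t(\w_t)\mid \F_t]$. Since $\w_t$ and $\bq_t$ are both $\F_t$-measurable, the only randomness to handle is in $H_t$ (the workers' stochastic gradients and their multiplicative noises) and in the auxiliary mini-batch used to form $\nabla L_t(\w_t)$.

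First I would note that, given $\F_t$, the workers' mini-batches and noises $\tilde\kappa_{t,i}$ are drawn independently of the auxiliary mini-batch $\xi_{\text{aux},t}$, so $H_t$ and $\nabla L_t(\w_t)$ are conditionally independent given $\F_t$. Hence
\begin{equation*}
    \E[H_t \nabla L_t(\w_t)\mid \F_t] = \E[H_t\mid \F_t]\,\E[\nabla L_t(\w_t)\mid \F_t].
\end{equation*}
From Assumption 2, the $i$-th row of $H_t$ is $h_{t,i}^T = (\tilde\kappa_{t,i}\tilde h_{t,i})^T$ with $\E[\tilde\kappa_{t,i}]=\kappa_i$ and $\E[\tilde h_{t,i}\mid\F_t]=\nabla F(\w_t)$, while $\tilde\kappa_{t,i}$ and $\tilde h_{t,i}$ are independent. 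Therefore $\E[h_{t,i}\mid\F_t]=\kappa_i\nabla F(\w_t)$, which I would record as the matrix identity $\E[H_t\mid \F_t] = \bk\,\nabla F(\w_t)^T$. The unbiasedness of the auxiliary gradient gives $\E[\nabla L_t(\w_t)\mid \F_t] = \nabla F(\w_t)$.

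Multiplying these together yields $\E[H_t\nabla L_t(\w_t)\mid \F_t] = \bk\,\|\nabla F(\w_t)\|^2$. Substituting into the update rule and using $\F_t$-measurability of $\bq_t$ gives \eqref{eqn:qtkappai} immediately. For \eqref{eqn:qtkappaii}, I would extract the $i$-th coordinate of \eqref{eqn:qtkappai} and multiply both sides by the deterministic scalar $\kappa_i$. For \eqref{eqn:qtkappaiii}, I would simply take the inner product of \eqref{eqn:qtkappai} with $\bk$ and use $\bk^T\bk = \|\bk\|^2$.

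There is no real obstacle here; the only point that needs a careful word is the conditional independence of $H_t$ and $\nabla L_t(\w_t)$ given $\F_t$, which relies on the algorithmic fact that the auxiliary sample $\xi_{\text{aux},t}$ is drawn fresh at time $t$ independently of the workers' data and of the adversarial multiplicative noises. Once that is stated, the three identities follow by linearity of conditional expectation.
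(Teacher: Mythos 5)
Your proposal is correct and follows essentially the same route as the paper: both condition on $\F_t$, factor $\E[H_t\nabla L_t(\w_t)\mid\F_t]=\E[H_t\mid\F_t]\,\E[\nabla L_t(\w_t)\mid\F_t]=\bk\,\norm{\nabla F(\w_t)}^2$ via the conditional independence of the workers' gradients and the auxiliary gradient, and obtain \eqref{eqn:qtkappaii} and \eqref{eqn:qtkappaiii} from \eqref{eqn:qtkappai} by linearity. If anything, you are slightly more careful than the paper's lemma proof, which invokes the factorization without stating the conditional-independence justification there (the paper only makes it explicit in the proof of Theorem~\ref{thm:main}).
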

\begin{proof}
We know that $\E[H_t|\F_t] = \bk \nabla F(\w_t)^T $ and $\E[\nabla L_t(\w_t)|\F_t] = \nabla F(\w_t)$. Using these relationships in the definition of $\bq_{t+1}$ in ByGARS++, we get
\begin{align*}
    \bq_{t+1} &= (1-\alpha_t)\bq_t + \alpha_t H_t \nabla L_t(\w_{t})\\
    \E [\bq_{t+1} | \mathcal F_t] &= (1-\alpha_t)\bq_t + \alpha_t \E [H_t \nabla L_t(\w_t) | \w_t, \bq_t] \\
    &= (1-\alpha_t)\bq_t + \alpha_t \norm{\nabla F(\w_t)}^2\bk.
\end{align*}
This establishes the equality in \eqref{eqn:qtkappai}, \eqref{eqn:qtkappaii}, and \eqref{eqn:qtkappaiii}. 
\end{proof}
An immediate corollary to the above result is as follows. Define $\beta_{s,t} = \prod_{k=s+1}^t(1-\alpha_k)>0$ with the convention that $\beta_{t,t} = 1$. We can rewrite $\E[\bk^T\bq_{t+1}|\F_t]$ as
\begin{align}
    \E[\bk^T\bq_{t+1}|\F_t] = \sum_{s=0}^t\beta_{s,t}\alpha_s\|\bk\|^2\|\nabla F(\w_s)\|^2,\label{eqn:kappaq}
\end{align}
where we used the fact that $\bq_0 = 0$. 

We now proceed to establish Theorem 1, which follows from the law of iterated expectation and the principle of mathematical induction. Consider the stochastic process $\bk^T \bq_{t}$ in \eqref{eqn:qtkappaiii}. At time $t =0$, $\bq_0 = 0$. Thus, we have $\E[\bk^T\bq_1|\F_0] >0$ since we assumed that there is at least one $\kappa_i\neq 0$ and $\nabla F(\w_0)\neq 0$. Thus, we conclude that $\E[\bk^T\bq_1] >0$ by the law of iterated expectation. The same argument establishes the fact that $\E[\bk^T \bq_t]>0$ for all $t\geq 1$, since the right side of \eqref{eqn:qtkappaiii} is a convex combination of terms that are positive in expectation (until $\w_t = \w_*$). This can also be deduced from \eqref{eqn:kappaq} by taking the expectation of the right side and noting that it is positive.

We now apply the same argument to $\mathbb{E}[\dott{\nabla F(\w_{t}), H_t^T\bq_{t}}|\F_t]$. Clearly,
\begin{align}
    \E [\dott{\nabla F(\w_{t}) , H_t^T\bq_{t}}|\F_t] &= \dott{\nabla F(\w_{t}), \E[H_t^T\bq_t|\F_t]}\\
    &= \dott{\nabla F(\w_{t}), \nabla F(\w_{t})\bk^T\bq_{t}}\nonumber\\
    &= \bk^T\bq_t \norm{\nabla F(\w_t)}^2\label{eqn:bknabla}
\end{align}
Coupling the above expression with \eqref{eqn:kappaq}, we get 
\begin{align*}
    &\E [\dott{\nabla F(\w_{t}) , H_t^T\bq_{t}}] =\sum_{s=0}^{t-1}\alpha_s\beta_{s,t-1}\|\bk\|^2\E[\norm{\nabla F(\w_s)}^2\|\nabla F(\w_t)\|^2]> 0,
\end{align*}
since both terms within the expectation are positive random numbers unless $\w_t = \w_*$. This completes the proof of the Theorem 1.

\section{Simulations}
\label{sec:appendix_simulations}
\subsection{Setup}

The simulations are implemented in Pytorch. Although the algorithm is aimed at robustifying distrbuted machine learning, in the implementation we only use one node while processing the workers sequentially so that the results can be reproduced even on a single machine. We train the models on Nvidia RTX 6000 GPU.

\subsection{Hyperparameters} We used the following hyper-parameters for different configurations. In this section, the step size parameter $\gamma_t$ used to update $\w_t$ is called as learning rate, and the step size parameter $\alpha_t$ used to update $\bq_t$ is called as meta-learning rate. We scheduled the learning rate ($\gamma_t$) to decay as: $\gamma_t = \gamma_0 \times \frac{1}{1 + \beta t}$, and meta learning rate ($\alpha_t$) as : $\alpha_t = \alpha_0 \times \frac{1}{1+\beta_m t^{0.9}}$. Note that we only change the meta update parameters for the two algorithms. We used the following hyperparameters

\begin{enumerate}
    \item CIFAR10: $\gamma_0$= 0.2, $\beta$= 0.9, 
    \begin{itemize}
        \item ByGARS: $\alpha_0$= 0.2, $\beta_m= 0.5$
        \item ByGARS++: $\alpha_0$= 0.001, $\beta_m= 0.1$
    \end{itemize}
    \item MNIST: $\gamma_0$= 0.05, $\beta$= 0.5,
    \begin{itemize}
        \item ByGARS: $\alpha_0$= 0.05, $\beta_m= 0.5$
        \item ByGARS++: $\alpha_0$= 0.001, $\beta_m= 0.2$
    \end{itemize}
\end{enumerate}

Since our objective is to illustrate the effectiveness of the proposed algorithms against the adversaries, we did not perform hyperparameter tuning to obtain best test accuracy. Instead, we show that for appropriately chosen hyperparameters, the proposed algorithms converge and are Byzantine resilient. Unless otherwise mentioned, all the results mentioned use an auxiliary dataset of size 250, and $k=3$ meta iterations (for ByGARS).

\subsection{Ablation study on auxiliary dataset size}

The key to the effectiveness of ByGARS and ByGARS++ is the availability of an auxiliary dataset that is drawn from the same distribution as the testing and training data. We study the dependence of the algorithms on the auxiliary dataset by repeating the experiments with different sizes of the auxiliary dataset. From Fig~(\ref{fig:aux_size_bygars}, \ref{fig:aux_size_bygars_plus}) we can observe that a very small amount of auxiliary dataset is sufficient to face any number of adversarial workers, and the performance increases with increase in the size of the auxiliary dataset. However, we observe that the algorithm is quite robust to the size of the auxiliary dataset once a sufficient size is reached. Note that the default size used in all other experiments is 250.

\subsection{Ablation study on number of meta iterations} In order to study the dependence of ByGARS on the number of meta iterations, we repeated the experiments with number of meta iterations ranging from 1 to 4 (we don't do this for ByGARS++ since there is only one meta update). We show the results in Fig~\ref{fig:meta_itr_bygars}. We can observe that, higher the number of meta iterations, the better is the test accuracy. However, we did not study if with increasing number of meta-iterations, we over fit to the auxiliary data set and leave it for future work. Note that in the rest of the experiments, we used 3 meta iterations.

\begin{figure*}[t]
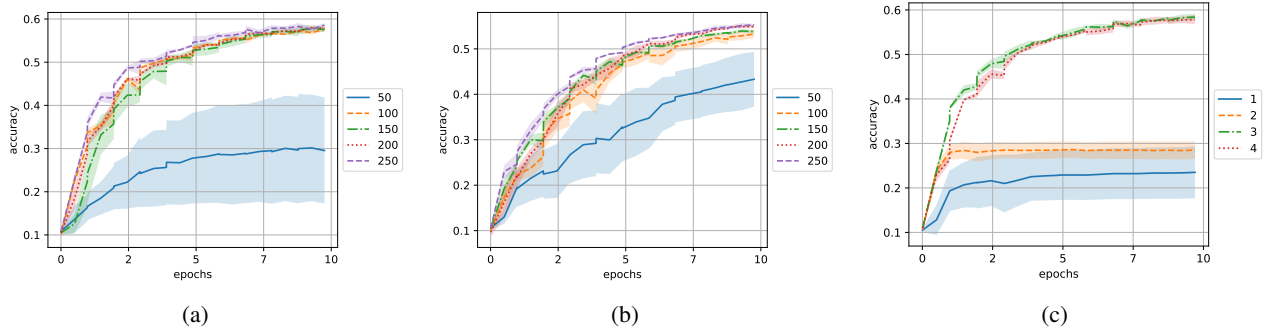

    \centering
    \subfloat[]{
        \label{fig:aux_size_bygars}
        \includegraphics[width=0.33\textwidth]{sysmlimgs/label_flip_aux_size_bygars.pdf}
    }
    \subfloat[]{
        \label{fig:aux_size_bygars_plus}
        \includegraphics[width=0.33\textwidth]{sysmlimgs/label_flip_aux_size_bygars_plus.pdf}
    }
    \subfloat[]{
        \label{fig:meta_itr_bygars}
        \includegraphics[width=0.33\textwidth]{sysmlimgs/num_meta_itrs_bygars.pdf}
    }\\
    \caption{\label{fig:ablation} (\ref{fig:aux_size_bygars}) Ablation on size of auxiliary dataset for ByGARS under 3 Label Flip adversaries; (\ref{fig:aux_size_bygars_plus}) Ablation on size of auxiliary dataset for ByGARS++ under 3 Label Flip adversaries; (\ref{fig:meta_itr_bygars}) Ablation on number of meta iterations for ByGARS under 3 Label Flip adversaries. }
\end{figure*}

\begin{figure*}[t]
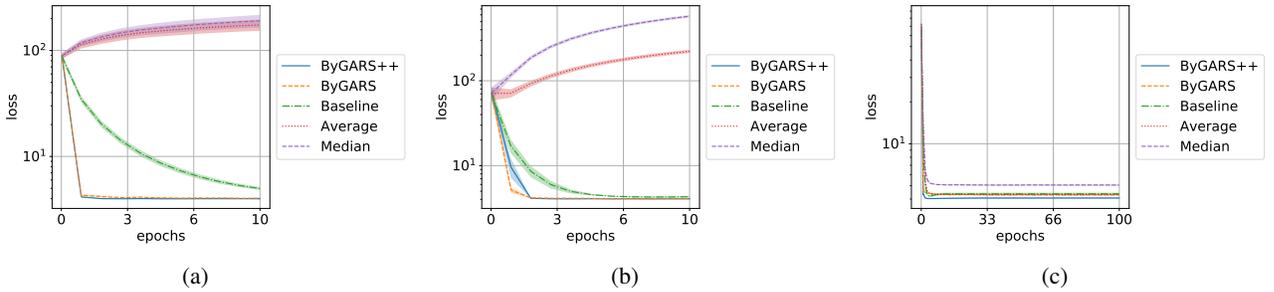

    \centering
    \subfloat[]{
        \label{fig:synth_mixed}
        \includegraphics[width=0.33\textwidth]{sysmlimgs/synth_mixed.pdf}
    }
    \subfloat[]{
        \label{fig:synth_const_2}
        \includegraphics[width=0.33\textwidth]{sysmlimgs/synth_const_2.pdf}
    }
    \subfloat[]{
        \label{fig:synth_lie_4}
        \includegraphics[width=0.33\textwidth]{sysmlimgs/synth_lie_4.pdf}
    }
    \caption{\label{fig:synth}  (\ref{fig:synth_mixed}) MSE loss of the synthetic dataset under \textit{Mixed Attack}; (\ref{fig:synth_const_2}) MSE loss of the synthetic dataset under 6 \textit{Constant Attack} adversaries; (\ref{fig:synth_lie_4}) MSE loss of the synthetic dataset under 4 \textit{LIE} adversaries.}
\end{figure*}

\subsection{Synthetic Datset}
In addition to CIFAR-10, and MNIST, we also evaluated our algorithm on a linear regression task using a synthetic datset. In this section we will provide details of the construction of synthetic dataset, and the results on \textit{Mixed Attack}. 

Given input dimension $d$, we draw a gaussian vector in $\mathbb{R}^{d}$ with non-zero mean and identity covariance and denote it as $\theta^*$. Next, given the number of datapoints $N$, we generate $N$ random vectors in $\mathbb{R}^d$. For each data point, we assign $y=x^T\theta^*+\epsilon$ where $x$ is the datapoint, and $\epsilon$ follows a zero mean Gaussian distribution. The objective is to find $\theta$ such that the mean square error $\norm{Y-X^T\theta}^2$ is minimized. In the simulations, we used $d=20, N=10,000$ of which 2000 were used for testing, and the remaining 8000 is split between training data and the auxiliary data (size=250). We provide results for \textit{Mixed Attack, Constant Attack} and \textit{LIE} on this synthetic task in Fig~\ref{fig:synth}, where we evaluate using the Mean Squared Error (MSE) on the test set. We can observe that both the proposed algorithms are robust to all the attacks. We ran \textit{LIE} attack for 100 epochs to verify that performance doesn't drop in the middle of training (which was observed with the case of ByGARS++).

\subsection{Reputation Scores}

Fig.~\ref{fig:q_weights} shows how the reputation scores evolve over time for different scenarios under mixed attack. It can be observed from Fig.~(\ref{fig:q_mnist_bygars}, ~\ref{fig:q_synth_bygars}) that the reputation scores for ByGARS reach an equilibrium value and stays there throughout the training. The reputation scores for ByGARS++ on the other hand converge towards zero as the training progresses, see Fig.~(\ref{fig:q_mnist_bygarsplus},\ref{fig:q_synth_bygarsplus}). Note that in \textit{Mixed Attack}, only worker 0 is benign, while the rest are adversarial, and we can observe from Fig.~\ref{fig:q_weights} that the reputation scores for the adversaries are either close to zero or negative depending on the type of the adversary.

\begin{figure*}[b]
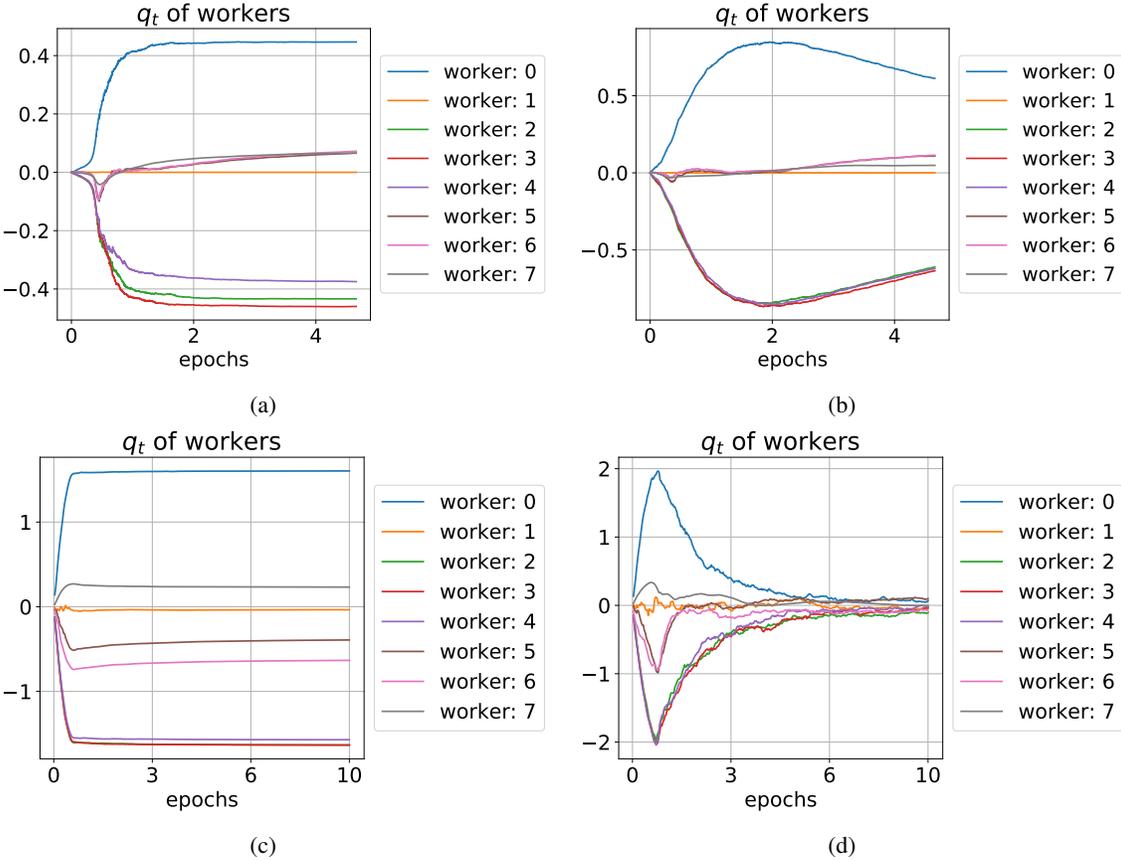

    \centering
    \subfloat[]{
        \label{fig:q_mnist_bygars}
        \includegraphics[width=0.45\textwidth]{sysmlimgs/mnist_bygars_mixed_q.pdf}
    }
    \subfloat[]{
        \label{fig:q_mnist_bygarsplus}
        \includegraphics[width=0.45\textwidth]{sysmlimgs/mnist_bygarsplus_mixed_q.pdf}
    }\\
    \subfloat[]{
        \label{fig:q_synth_bygars}
        \includegraphics[width=0.45\textwidth]{sysmlimgs/synth_bygars_mixed_q.pdf}
    }
    \subfloat[]{
        \label{fig:q_synth_bygarsplus}
        \includegraphics[width=0.45\textwidth]{sysmlimgs/synth_bygarsplus_mixed_q.pdf}
    }\\
    \caption{\label{fig:q_weights} This figure shows the reputation scores for ByGARS, ByGARS++ under different scenarios. Attack carried out is mixed attack in all cases. (\ref{fig:q_mnist_bygars}) ByGARS on MNIST; (\ref{fig:q_mnist_bygarsplus}) ByGARS++ on MNIST;
    (\ref{fig:q_synth_bygars}) ByGARS on synth; (\ref{fig:q_synth_bygarsplus}) ByGARS++ on synth;}
\end{figure*}

\end{document}